\documentclass[conference,letterpaper]{IEEEtran}
\usepackage[letterpaper, left=0.75in, right=0.75in, bottom=0.75in, top=0.75in]{geometry}

\IEEEoverridecommandlockouts

\usepackage{array}
\usepackage{hyperref}
\usepackage{cite}
\usepackage{amsmath,amssymb,amsfonts}
\usepackage{algorithm}
\usepackage{algorithmic}
\usepackage{graphicx}
\usepackage{adjustbox}
\usepackage{textcomp}
\usepackage{xcolor}
\usepackage{booktabs}
\usepackage{subcaption}
\usepackage{caption}
\usepackage{multirow}
\usepackage{enumitem}
\usepackage{amsthm}
\newtheorem{proposition}{Proposition}
\usepackage{tikz}
\usetikzlibrary{positioning,arrows.meta,fit}
\begin{document}

\title{Cost-Optimal Foundation Model Deployment Portfolio for Transportation Management}


\author{\IEEEauthorblockN{Xi Cheng$^{1,*}$, Ke Liu$^{2,*}$, Siyuan Feng$^{3}$, Jane Lin$^{4}$, H. Oliver Gao$^{1}$}%
\thanks{$^{*}$Equal contribution. Contact: xc557@cornell.edu.}%
\thanks{$^{1}$Cornell University, Ithaca, NY, USA.}%
\thanks{$^{2}$University of California, Berkeley, CA, USA.}%
\thanks{$^{3}$The Hong Kong Polytechnic University, Hong Kong SAR.}%
\thanks{$^{4}$University of Illinois Chicago, Chicago, IL, USA.}}

\maketitle

\begin{abstract}
Foundation models, including large language models (LLMs) and vision-language models (VLMs), are increasingly used for transportation management center (TMC) tasks such as anomaly detection, incident reporting, and traveler information. Deploying multiple such models across TMC functions raises a portfolio question: which model should serve each function, in which deployment mode, and under what shared hardware budget? We formulate this as the Foundation Model Deployment Portfolio (FMDP) problem, a mixed-integer program minimizing total cost of ownership (TCO) subject to per-function quality, latency, and safety constraints over shared GPU capacity. We prove the problem NP-hard by reduction from the 0-1 knapsack problem and propose a polynomial-time greedy
heuristic. In an illustrative case study with five TMC functions and 19 candidate (model, mode) pairs, FMDP identifies a mixed portfolio costing \$34/mo (97\% below the cheapest feasible all-closed-API baseline) by routing four functions to open-source APIs and the one function whose quality floor no open-source model meets to a closed
API. Break-even analysis shows that on-premise GPU investment becomes reasonable only above approximately 309~vision queries/hour or if API prices double.
\end{abstract}

\begin{IEEEkeywords}
foundation model deployment, large language models, mixed-integer
programming, total cost of ownership, transportation management
centers
\end{IEEEkeywords}

\section{Introduction}
\label{sec:intro}

Transportation management centers (TMCs) operate continuously,
monitoring hundreds of CCTV feeds, detecting and verifying incidents, coordinating response with police and fire agencies, adjusting signal timing and ramp metering, and pushing traveler information to dynamic message signs and 511 traveler information services~\cite{fhwa2024nextgen}. A recent Federal Highway Administration (FHWA) briefing on next-generation Traffic Management Systems recommends that agencies ``assess the balance between on-premises and cloud-enabled capabilities'', weighing AI/ML computing needs against the cost of GPUs~\cite{itsjpo2025nextgen}. To date, we are aware of no formal framework for this assessment.

Many TMC functions already rely on AI models. Classical detectors like YOLO~\cite{ultralytics2023yolov8} handle vehicle counting and basic anomaly detections, but foundation models (FMs) can support broader TMC tasks: VLMs interpret camera scenes for rare-event detection~\cite{zanella2024harnessing}, LLMs predict traffic states from heterogeneous sensor data~\cite{guo2024towards}, draft incident reports~\cite{da2024open}, and advise on signal timing~\cite{lai2025llmlight}. Several
recent surveys document these applications~\cite{wandelt2024large, zhang2026large, fan2024multimodal}.

As capable models become widely available, a practical planning question emerges: \emph{which model should serve each function, and how should it be deployed?} Agencies can now choose closed-source cloud APIs (e.g., GPT-4o at \$2.50/M tokens~\cite{openai2025pricing}, Gemini~Flash at \$0.075/M~\cite{google2025pricing}), open-source models on hosted inference platforms (e.g., Llama-3.1-8B~\cite{grattafiori2024llama} at \$0.18/M via Together.ai~\cite{together2025pricing}), or on-premise (self-hosted) GPU. These options differ substantially in cost, latency, quality, and data sovereignty. Moreover, the decisions are coupled: when multiple functions share a limited on-premise GPU budget, loading a large vision model for detection can consume available memory to force the report generator onto a more expensive cloud API.

Existing work on LLM cost optimization operates at the per-query level. FrugalGPT~\cite{chen2023frugalgpt} cascades models within a single task; RouteLLM~\cite{ongroutellm} and ThriftLLM~\cite{huang2025thriftllm} route individual queries to cheaper alternatives; and INFaaS~\cite{romero2021infaas} automates model-variant selection at runtime. None of these methods address the multi-function setting in which functions compete for shared infrastructure, and none guarantee the operational requirements a TMC imposes. A low-cost model is not useful if it misreads a sensor or produces an incoherent incident report, so deployment must respect a minimum \emph{quality}. It must also meet \emph{latency} targets; FHWA, for instance, names rapid incident detection a primary Traffic Incident Management performance goal~\cite{itsjpo2025nextgen}. And for safety-critical functions such as wrong-way-driver detection, a \emph{safety} bound must hold the false-negative rate (FNR) below an agency-specified threshold. To our knowledge, FM deployment for TMC operations has not been formulated as a portfolio optimization problem under constraints of the quality, latency, safety, and costs. Our contributions are threefold.
\begin{enumerate}[leftmargin=*, nosep]
    \item We formulate the Foundation Model Deployment Portfolio (FMDP) problem as a constrained mixed-integer program that minimizes monthly total cost of ownership (TCO) across functions sharing GPU capacity, subject to quality, latency, and safety constraints.
    \item We prove the FMDP is NP-hard via reduction from the 0-1 knapsack problem and propose a greedy heuristic.
    \item We present an illustrative TMC case study showing 97\% cost reduction over the cheapest feasible all-closed-API baseline and derive per-function API vs.\ on-premise break-even formulas.
\end{enumerate}

\section{Related Work}
\label{sec:related}

\textbf{Foundation models in transportation.}
Several surveys discuss FM applications in intelligent
transportation: Wandelt et al.~\cite{wandelt2024large} review LLMs
across ITS tasks; Zhang et al.~\cite{zhang2026large} focus on LLMs
for mobility forecasting; Fan et al.~\cite{fan2024multimodal} address
multimodal perception and decision-making on complex roads. VLMs have been applied to video anomaly detection~\cite{zanella2024harnessing, yao2022dota}, LLMs to traffic state prediction~\cite{guo2024towards} and signal control~\cite{lai2025llmlight}, and LLM-based agents to incident analysis~\cite{da2024open}. 

\textbf{Cost-efficient model selection and routing.}
FrugalGPT~\cite{chen2023frugalgpt} pioneered LLM cascading, reporting up to 98\% cost reduction by querying models from cheapest to most expensive until a quality bar is met. ThriftLLM~\cite{huang2025thriftllm} formulates ensemble selection with approximation guarantees; RouteLLM~\cite{ongroutellm} trains preference-based routers; Hybrid\,LLM~\cite{dinghybrid} routes between cloud and on-device models; Dekoninck et al.~\cite{dekoninck2025unified} derive provably optimal routing; and FORC~\cite{vsakota2024fly} applies integer programming to per-task assignment. All target a single function and do not model shared GPU capacity.

\textbf{Deployment economics and security.}
Zhang et al.~\cite{zhang2025cloud} analyze cloud vs on-premise deployment as a pricing game between LLM providers, finding that privacy concerns shape equilibrium strategies. Huang et al.~\cite{huang2025middle} propose SOLID, a semi-open on-premise framework securing selected model layers against distillation while preserving fine-tuning flexibility. Both study a single model's deployment choice.

\textbf{Resource allocation and model serving.}
From an operations research point of view, FMDP combines a
multiple-choice assignment structure with knapsack-type GPU-capacity
constraints and a fixed-charge model-sharing structure, which relate FMDP to the Generalized Assignment Problem~\cite{fisher1986multiplier, pentico2007assignment}. INFaaS~\cite{romero2021infaas} automates the selection of models in heterogeneous hardware, and M\'{e}lange~\cite{griggs2024m} formulates the composition of the GPU fleet as a bin packing. These systems operate at per-query or serving-system granularity, whereas FMDP addresses deployment planning across functions.

\section{Problem Formulation}
\label{sec:formulation}
\subsection{Preliminaries}
Consider a TMC operating $F$ functions $\mathcal{F} = \{1, \ldots, F\}$ over a
planning horizon of $H$ hours (e.g., $H = 720$ for one month). Each
function $f$ processes queries at rate $\lambda_f$ (queries/hour) and must
satisfy: (i)~a minimum quality score $q_f^{\min} \in [0,1]$;
(ii)~a maximum per-query latency $L_f$ (seconds); and
(iii)~for safety-critical functions $f \in \mathcal{F}_s \subseteq \mathcal{F}$,
a domain-specific safety constraint $s_{fm} \le S_f^{\max}$.
What $s_{fm}$ measures depends on the task. For video anomaly
detection, a natural choice is FNR~\cite{zanella2024harnessing}; for text generation, it could be a
hallucination rate~\cite{huang2025survey}. The formulation can use any task-specific safety metric that is measured at the model level and constrained by an upper bound.
Table~\ref{tab:notation} summarizes the notations.

\begin{table}[t]
\centering
\caption{\textsc{Table of Notations}}
\label{tab:notation}
\footnotesize
\begin{tabular}{@{}ll@{}}
\toprule
Notation & Description (units) \\
\midrule
\multicolumn{2}{@{}l}{\textit{Sets and indices}} \\
$f \in \mathcal{F}$ & Function index \\
$m \in \mathcal{M}$ & Model index \\
$d \in \mathcal{D}$ & Deployment mode \\
$\mathcal{A}_f \subseteq \mathcal{M} \times \mathcal{D}$ & Feasible (model, mode) pairs for function $f$ \\
\midrule
\multicolumn{2}{@{}l}{\textit{Decision variables}} \\
$x_{fmd} \in \{0,1\}$ & 1 if function $f$ is assigned to $(m,d)$ \\
$y_{md} \in \{0,1\}$ & 1 if model $m$ is loaded in GPU mode $d$ \\
\midrule
\multicolumn{2}{@{}l}{\textit{Operational parameters}} \\
$H$ & Planning horizon (hours) \\
$\lambda_f$ & Query arrival rate (queries/hour) \\
$\tau_f^{\textsf{in}},\; \tau_f^{\textsf{out}}$ & Average input/output tokens per query \\
$q_{fm},\; q_f^{\min}$ & Quality score / minimum required \\
$\ell_{fmd},\; L_f$ & Per-query latency / maximum allowed (s) \\
$s_{fm},\; S_f^{\max}$ & Safety metric / maximum tolerated \\
$\mathrm{mem}_m$ & Model memory footprint (GB) \\
$\mathrm{cap}_d$ & GPU memory capacity per device (GB) \\
$n_{md}$ & GPUs needed: $\lceil \mathrm{mem}_m / \mathrm{cap}_d \rceil$ \\
$G_d$ & Available GPU count in mode $d$ \\
\midrule
\multicolumn{2}{@{}l}{\textit{Cost parameters}} \\
$p_m^{\textsf{in}},\; p_m^{\textsf{out}}$ & API price per token (\$/token) \\
$R_d$ & GPU rental rate (\$/hr per device) \\
$P_d$ & GPU purchase price (\$/device) \\
$T_d$ & Amortization period (months) \\
$\text{TDP}_d$ & Thermal design power (kW per device) \\
$p_e$ & Electricity price (\$/kWh) \\
$\mu_{fmd}$ & GPU utilization factor $\in [0,1]$ \\
\bottomrule
\end{tabular}
\end{table}

Let $\mathcal{M}$ be the set of candidate models and $\mathcal{D}$ the set
of deployment modes. We distinguish four modes by how inference is accessed
and where it runs (Table~\ref{tab:modes}). Pricing varies
enormously~\cite{pan2025cost,ptolemay2025tco,ivanovici2025cost,detectx2025api,sharma2025strategy}:
among closed-source APIs alone, GPT-4o charges \$2.50/M input tokens
while Gemini-2.5-Flash charges \$0.075/M, a 33$\times$ gap.
Open-source hosted APIs are cheaper and allow fine-tuning.
Cloud GPU rental charges by the hour regardless of utilization; the agency
can load any open-weight model using an inference engine (
e.g., vLLM~\cite{kwon2023efficient} or SGLang~\cite{zheng2024sglang}).
On-premise GPUs incur amortized purchase and energy costs but provide full data control.

\begin{table}[t]
\centering
\caption{\textsc{Deployment Modes and Pricing}}
\label{tab:modes}
\footnotesize
\setlength{\tabcolsep}{3pt}
\begin{tabular}{@{}llrl@{}}
\toprule
Mode & Model / Platform & Pricing & Trade-off \\
\midrule
\multirow{4}{*}{Closed API}
 & GPT-4o~\cite{openai2025pricing}             & \$2.50/\$10.00  & Highest quality \\
 & Claude Haiku~4.5~\cite{anthropic2025pricing} & \$1.00/\$5.00   & Mid-tier quality \\
 & GPT-4o-mini~\cite{openai2025pricing}         & \$0.15/\$0.60   & Budget tier \\
 & Gemini-2.5-Flash~\cite{google2025pricing}    & \$0.075/\$0.30  & Lowest closed \\
\midrule
\multirow{2}{*}{Open API}
 & Llama-3.1-8B~\cite{grattafiori2024llama}     & \$0.18/\$0.18   & Fine-tunable \\
 & Qwen2.5-VL-7B~\cite{bai2025qwen25vltechnicalreport}         & \$0.20/\$0.20   & Vision \\
\midrule
\multirow{2}{*}{Cloud GPU}
 & RTX 4090~\cite{runpod2025pricing}             & \$0.35/hr       & Full control \\
 & A100 80GB~\cite{runpod2025pricing}             & \$1.29/hr      & Full control \\
\midrule
\multirow{2}{*}{On-premise}
 & RTX 4090~\cite{nvidia2022rtx4090}             & \$44/mo+energy  & Sovereignty \\
 & A100 80GB~\cite{nvidia2020a100}               & \$417/mo+energy & Sovereignty \\
\bottomrule
\multicolumn{4}{@{}p{0.95\columnwidth}@{}}{\scriptsize
API: input/output per M tokens (Open-source models hosted inference APIs at representative prices).
On-premise: MSRP amortized over 36\,months.
Cloud GPU: on-demand hourly rate.
All prices retrieved Jan.\ 2026; subject to change.}
\end{tabular}
\end{table}

To compute the cost model, we reduce the four modes into two cost families: \emph{per-query variable cost} (both API types) and \emph{fixed infrastructure plus variable energy} (cloud rental and on-premise). We write $\mathcal{D} = \{\textsf{closed api}, \textsf{open api}, \textsf{gpu-rent},
\textsf{gpu-own}\}$. Let $\mathcal{A}_f \subseteq \mathcal{M} \times \mathcal{D}$ denote the feasible candidate set for function~$f$. Each pair $(m,d)$ is
characterized by quality $q_{fm}$, safety metric $s_{fm}$, latency $\ell_{fmd}$ (for $f \in \mathcal{F}_s$), and memory
$\mathrm{mem}_m$. For GPU modes, the required device count is
$n_{md}=\lceil \mathrm{mem}_m/\mathrm{cap}_d\rceil$.

\subsection{Cost Model}
Monthly cost consists of three components: variable API charges, fixed hardware costs, and, for owned hardware, variable energy cost.

\textbf{API cost} ($d = \textsf{(closed/open) api}$). Monthly cost is proportional to
query volume and token length:
\begin{equation}
    C^{\text{api}}_{fm} = \lambda_f \cdot H \cdot
    \bigl(\tau^{\textsf{in}}_f \cdot p^{\textsf{in}}_m
    + \tau^{\textsf{out}}_f \cdot p^{\textsf{out}}_m\bigr)
    \label{eq:api_cost}
\end{equation}
where $\tau^{\textsf{in}}_f, \tau^{\textsf{out}}_f$ are average
input/output tokens per query and
$p^{\textsf{in}}_m, p^{\textsf{out}}_m$ are the model provider's per-token prices.

\textbf{GPU hardware cost (shared).} For
$d \in \{\textsf{gpu-rent}, \textsf{gpu-own}\}$, hosting model~$m$
requires $n_{md}$ devices. Monthly infrastructure cost reads:
\begin{equation}
    C^{\text{hw}}_{md} = \begin{cases}
    R_d \cdot n_{md} \cdot H & d = \textsf{gpu-rent} \\[4pt]
    P_d \cdot n_{md} \,/\, T_d & d = \textsf{gpu-own}
    \end{cases}
    \label{eq:hw_cost}
\end{equation}
Importantly, $C^{\text{hw}}_{md}$ is charged once per model loaded.

\textbf{Energy cost} ($d = \textsf{gpu-own}$ only). Operational energy per function is given by:
\begin{equation}
    C^{\text{energy}}_{fmd} = \text{TDP}_d \cdot n_{md}
    \cdot \mu_{fmd} \cdot H \cdot p_e
    \label{eq:energy_cost}
\end{equation}
where $\mu_{fmd}=\min(\lambda_f \ell_{fmd}/3600,1)$ is the fraction of time the GPU actively processes queries for function~$f$, assuming sequential inference.

\subsection{FMDP Formulation}
\label{sec:formulation_mip}

We formulate FMDP as a mixed-integer program. Binary variable $x_{fmd}$ assigns function~$f$ to model--deployment pair $(m,d)$, and $y_{md}$ indicates whether model~$m$ is loaded on GPU infrastructure~$d$.

\begin{align}
    \min_{x, y} \;\; & \underbrace{
    \sum_{f} \sum_{\substack{(m,d) \in \mathcal{A}_f \\ d = \textsf{(closed/open) api}}}
    \!\!\! C^{\text{api}}_{fm}\, x_{fmd}
    }_{\text{API token cost}}
    \;+\; \underbrace{
    \sum_{m} \sum_{d \neq \textsf{api}}
    \!\!\! C^{\text{hw}}_{md}\, y_{md}
    }_{\text{fixed hardware cost}}  \notag \\
    & \;+\; \underbrace{
    \sum_{f} \sum_{\substack{(m,d) \in \mathcal{A}_f \\
    d = \textsf{gpu-own}}} \!\!\! C^{\text{energy}}_{fmd}\, x_{fmd}
    }_{\text{variable energy cost}}
    \label{eq:obj} \\[4pt]
    \text{s.t.} \;\;
    & \textstyle\sum_{(m,d) \in \mathcal{A}_f} x_{fmd} = 1
    \quad \forall\, f  \label{eq:assign} \\
    & \textstyle\sum_{(m,d) \in \mathcal{A}_f}
      q_{fm} \cdot x_{fmd} \geq q_f^{\min}
    \quad \forall\, f  \label{eq:quality} \\
    & \textstyle\sum_{(m,d) \in \mathcal{A}_f}
      s_{fm} \cdot x_{fmd} \leq S_f^{\max}
    \quad \forall\, f \!\in\! \mathcal{F}_s \label{eq:safety} \\
    & \textstyle\sum_{(m,d) \in \mathcal{A}_f}
      \ell_{fmd} \cdot x_{fmd} \leq L_f
    \quad \forall\, f  \label{eq:latency} \\
    & y_{md} \geq x_{fmd}
    \quad \forall\, f,\; (m,d),\; d \!\neq\! \textsf{api}
    \label{eq:link} \\
    & \textstyle\sum_{m} n_{md} \cdot y_{md} \leq G_d
    \quad \forall\, d \!\neq\! \textsf{api} \label{eq:gpu} \\
    & x_{fmd},\; y_{md} \in \{0,1\} \label{eq:binary}
\end{align}

Constraint~\eqref{eq:assign} assigns exactly one model--mode pair per
function. Constraints~\eqref{eq:quality}--\eqref{eq:latency} enforce
quality floors, safety bounds and
latency ceilings. Since exactly one $x_{fmd}=1$ per function, these reduce to checking whether the selected pair meets each threshold; infeasible pairs can be pre-filtered from~$\mathcal{A}_f$. We keep the
linear-sum form because it produces tighter LP relaxation bounds
(Section~\ref{sec:solution}).
Constraint~\eqref{eq:link} links the two variable families: using
model~$m$ on a GPU for any function forces $y_{md}=1$.
Constraint~\eqref{eq:gpu} ensures that GPU usage in each mode does not exceed $G_d$.

\section{Complexity and Solution Methods}
\label{sec:solution}

\subsection{NP-Hardness}

\begin{proposition}
The FMDP is NP-hard.
\end{proposition}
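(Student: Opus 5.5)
We prove NP-hardness by a polynomial reduction from the decision version of 0-1 knapsack: given items $i=1,\dots,n$ with integer weights $w_i>0$, values $v_i>0$, capacity $W$, and target $V$, decide whether some subset $S$ satisfies $\sum_{i\in S} w_i\le W$ and $\sum_{i\in S} v_i\ge V$. It suffices to show that the decision version of FMDP (``is there a feasible assignment of cost at most $B$?'') is NP-hard, since FMDP then cannot be solved in polynomial time unless P$=$NP.

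\textbf{Construction.} We create one function $f_i$ per item, a single GPU mode $d=\textsf{gpu-rent}$, and a single API mode. For each item $i$ we introduce a dedicated model $m_i$ that is feasible for $f_i$ only. Its memory $\mathrm{mem}_{m_i}$ is chosen so that $n_{m_i d}=w_i$; for example, set $\mathrm{cap}_d=1$ and $\mathrm{mem}_{m_i}=w_i$. The GPU budget is $G_d=W$. Each function then has exactly two candidate pairs: $\mathcal{A}_{f_i}=\{(m_i,\textsf{gpu-rent}),(a,\textsf{open api})\}$, where $a$ is an API model. All quality, latency, and safety thresholds are set loosely so that both pairs are admissible, and we take $\mathcal{F}_s=\emptyset$. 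Costs are chosen so that serving $f_i$ on the GPU saves exactly $v_i$ relative to the API. The hardware charge for $m_i$ is $C^{\text{hw}}_{m_i d}=R_d\, w_i H$; to decouple savings from weights, we set $R_d H$ to a tiny constant $\epsilon$. Alternatively, and more cleanly, we take the GPU mode to be \textsf{gpu-own} and set $P_d=0$ and $\text{TDP}_d=0$, so GPU use is free. We then pick the API token parameters so that $C^{\text{api}}_{f_i a}=v_i$; for example, $\lambda_f H=1$, $\tau^{\textsf{out}}=0$, and a per-function token count $\tau_{f_i}^{\textsf{in}}=v_i$ with unit price. The cost threshold is $B=\sum_i v_i - V$. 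Every number in the instance is polynomial in the knapsack input, so the construction is polynomial.

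\textbf{Correctness.} Since each $m_i$ is used by only $f_i$, the linking constraint \eqref{eq:link} forces $y_{m_i d}=x_{f_i m_i d}$. The capacity constraint \eqref{eq:gpu} then reads $\sum_i w_i x_{f_i m_i d}\le W$. The objective equals $\sum_{i\notin S} v_i$, where $S$ is the set of functions placed on the GPU. Hence a feasible FMDP solution of cost at most $B$ exists if and only if some $S$ with $\sum_{S} w_i\le W$ has $\sum_{S} v_i\ge V$.

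\textbf{Main obstacle.} The delicate step is realizing arbitrary integer weights and values through the paper's specific parametric cost and memory formulas (ceiling-based $n_{md}$, shared per-model hardware charges, token-based API prices) without introducing stray cost terms. Using distinct models per function and zeroing the owned-hardware and energy costs removes the sharing and fixed-charge effects. If zero prices are deemed inadmissible, I would instead use a small positive $\epsilon$ per GPU unit and scale the values by a factor exceeding $\epsilon\sum_i w_i$, so that rounding cannot change the answer.
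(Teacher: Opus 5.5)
Your reduction is correct and is essentially the paper's. It uses one function per knapsack item, a dedicated GPU model with $n_{m_i d}=w_i$, a GPU budget equal to the knapsack capacity, non-binding quality/latency/safety constraints, and an API alternative whose cost exceeds the GPU option by exactly $v_i$. The differences are cosmetic: the paper keeps a positive amortized hardware cost $\kappa w_j$ (zeroing only $p_e$) and sets the API cost to $\kappa w_j + v_j$, so the saving is exactly $v_j$ without needing $P_d=0$ or your $\epsilon$-scaling fallback, and it reduces from the optimization rather than the decision version of knapsack.
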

\begin{proof}
We reduce from the 0--1 Knapsack Problem
(KP)~\cite{karp2009reducibility}, in which items $j = 1, \ldots, J$
with weights $w_j$, values $v_j$, and a capacity $B$ are given.
Construct an FMDP instance with one function $f_j$ per item and
budget $G_{\textsf{gpu-own}} = B$; the mode set $\mathcal{D}$ is
unchanged, and modes other than $\textsf{gpu-own}$ and
$\textsf{open api}$ simply have no feasible candidates. Set the electricity price $p_e = 0$. Each function $f_j$ has two feasible candidates. The first is a distinct local model $m_j$ with
$\mathrm{mem}_{m_j} = w_j \cdot \mathrm{cap}_{\textsf{gpu-own}}$, so
that $n_{m_j,\textsf{gpu-own}} = w_j$ and, by
Eq.~\eqref{eq:hw_cost}, its hardware cost is $\kappa w_j$ with
$\kappa = P_{\textsf{gpu-own}} / T_{\textsf{gpu-own}}$. The second is
a dedicated API model $a_j$ whose per-token price is chosen through
Eq.~\eqref{eq:api_cost} so that its monthly cost equals
$\kappa w_j + v_j$. All quality, latency, and safety constraints are
set non-binding. Because each $m_j$ serves only $f_j$, an assignment
is feasible iff the GPU-assigned subset $S$ satisfies
$\sum_{j \in S} w_j \leq B$, and its total cost is
$\kappa \sum_{j} w_j + \sum_{j \notin S} v_j$. Minimizing cost is
therefore equivalent to maximizing $\sum_{j \in S} v_j$ subject to
the knapsack constraint, so solving FMDP optimally solves KP.
\end{proof}

\subsection{LP Relaxation Bound}
Relaxing the binary variables to $[0,1]$ gives a polynomial-time LP
lower bound. Without the GPU-capacity constraint~\eqref{eq:gpu} and
the linking constraints~\eqref{eq:link}, the problem decomposes into
independent per-function selections whose LP relaxation is integral.
With shared GPU capacity and fixed-charge loading variables,
tightness is no longer guaranteed; we therefore use the LP optimum
as a lower bound to certify heuristic solution quality.

\subsection{Greedy Heuristic}
Algorithm~\ref{alg:greedy} is a two-phase greedy heuristic cheap
enough to re-run whenever prices change. \textbf{Phase~1} scans the
functions in arbitrary order and assigns each to its cheapest
feasible (model, mode) pair given the models already loaded; a model
loaded once serves later functions at no extra hardware cost,
matching the fixed-charge structure of~\eqref{eq:obj}.
\textbf{Phase~2} runs only if the GPU budget~$G_d$ is exceeded. It
first computes the cheapest feasible API
fallback cost for each GPU-assigned function~$\hat{c}_f$, then repeatedly switches the function
with the smallest repair ratio $r_f = \Delta_f / g_f$ to its
fallback, where $\Delta_f = \hat{c}_f - \mathrm{cost}_{fmd}$ is the
resulting cost increase and $g_f$ is the number of GPU devices
released. A function sharing its model with others frees no devices,
so we set $g_f = 0$ and $r_f = +\infty$ rather than divide by zero.
If all $r_f=+\infty$, the heuristic switches the function with the
smallest $\Delta_f$, reducing the number of users of at least one
loaded model and eventually releasing capacity. The loop ends once the budget is met. The Worst-case
complexity is $O(F \cdot |\mathcal{A}|_{\max} + F^2)$: candidate 
lists are scanned once in Phase~1 and once for~$\hat{c}_f$, and
Phase~2 performs repair iterations of at most~$F$, up
to~$F$ functions.

\begin{algorithm}[t]
\caption{Two-Phase Greedy Heuristic for FMDP}
\label{alg:greedy}
\begin{algorithmic}[1]
\STATE \textbf{Input:} functions $\mathcal{F}$, catalogs $\{\mathcal{A}_f\}$, GPU budget $G_d$
\STATE \textbf{Output:} assignment $\sigma$; loaded GPU models $\mathcal{Y} \subseteq \mathcal{M}$, initially $\emptyset$
\STATE \textit{// Phase 1: minimum-cost assignment}
\FOR{each $f \in \mathcal{F}$}
    \STATE $\sigma(f) \leftarrow \arg\min_{(m,d) \in \mathcal{A}_f} \mathrm{cost}_{fmd}(\mathcal{Y})$; \textbf{if} $d = \mathrm{gpu}$ \textbf{then} $\mathcal{Y} \leftarrow \mathcal{Y} \cup \{m\}$
\ENDFOR
\STATE $G_{\text{used}} \leftarrow \sum_{m \in \mathcal{Y}} n_{m,\text{gpu}}$; \textbf{if} $G_{\text{used}} \leq G_d$ \textbf{then return} $\sigma$
\STATE \textit{// Phase 2: capacity repair}
\STATE $\hat{c}_f \leftarrow \min_{(m',d') \in \mathcal{A}_f,\, d' = \mathrm{api}} \mathrm{cost}_{fm'd'}$ for each GPU-assigned $f$
\WHILE{$G_{\text{used}} > G_d$}
    \FOR{each $f$ with $\sigma(f) = (m, \mathrm{gpu})$}
        \STATE $\Delta_f \leftarrow \hat{c}_f - \mathrm{cost}_{fmd}$;\;
          $g_f \leftarrow n_{m,\mathrm{gpu}} \cdot \mathbf{1}[f \text{ sole user of } m]$
        \STATE $r_f \leftarrow \Delta_f / g_f$ \textbf{if} $g_f > 0$ \textbf{else} $+\infty$
    \ENDFOR
    \STATE Switch $f^*$ with smallest $r_f$ (smallest $\Delta_f$ if all $r_f{=}+\infty$); update $\sigma$, $\mathcal{Y}$, $G_{\text{used}}$
\ENDWHILE
\RETURN $\sigma$
\end{algorithmic}
\end{algorithm}

\section{Illustrative Case Study}
\label{sec:case}
We use an illustrative TMC case study to demonstrate the FMDP. Scenario details are demonstrated as follows. In practice, agencies would replace the illustrative entries in Table~\ref{tab:catalog} with measurements from their own models and workloads.

\subsection{Scenario Design}

We consider a TMC with $\mathcal{F}{=}5$ functions and illustrative requirements (Table~\ref{tab:functions}), reflecting the operational scope described in the NextGen TMS briefing~\cite{itsjpo2025nextgen}: video
anomaly detection~(f1), traffic state estimation~(f2), incident report generation~(f3), signal timing advisory~(f4), and traveler information~(f5). Function~f1 is safety-critical: it monitors CCTV feeds for wrong-way drivers, pedestrians on freeways, and road safety, and we assume a FNR bound of $s_{f1} \leq 0.10$.
Query rates range widely, from 200~queries/hour
for f1 (one frame every 18\,s) down to just 3~incidents/hour for f3.

\begin{table}[t]
\centering
\caption{\textsc{Illustrative TMC Functions}}
\label{tab:functions}
\footnotesize
\setlength{\tabcolsep}{3pt}
\begin{tabular}{@{}clrrrl@{}}
\toprule
ID & Function & $\lambda_f$ & $L_f$ & $q_f^{\min}$ & Safety \\
 & & \scriptsize(q/hr) & \scriptsize(s) & & \\
\midrule
f1 & Video anomaly detection\ & 200 & 5 & 0.70 & FNR$\,{\leq}\,$0.10 \\
f2 & Traffic state estimation\ & 120 & 10 & 0.75 & -- \\
f3 & Incident report generation\ & 3 & 30 & 0.80 & -- \\
f4 & Signal timing advisory\ & 60 & 5 & 0.70 & -- \\
f5 & Traveler information & 10 & 15 & 0.75 & -- \\
\bottomrule
\end{tabular}
\end{table}

We assemble a set of candidates per function from the deployment modes in
Table~\ref{tab:modes}; Table~\ref{tab:catalog} lists the full
catalog. Quality and FNR scores are illustrative but align with  published performance tiers: frontier closed-source VLMs outperform open-source VLMs, which in turn outperform classical
detectors~\cite{zanella2024harnessing,yao2022dota,chen2024internvl,ultralytics2023yolov8}; text-function quality (f2--f5) follows the same ordering, consistent with LLM-as-judge evaluations~\cite{zheng2023judging} and public leaderboards (e.g.,~\cite{artificialanalysis2025leaderboard}). We assume that
the relative performance of these models on TMC tasks mirrors their
ranking on public benchmarks.

API costs are computed from Eq.~\eqref{eq:api_cost}. Because providers use different tokenizers, the same prompt results in different token counts and different per-query costs~$c_q$. For vision queries, GPT-4o encodes a 720p CCTV frame as ${\sim}$1{,}105 image tokens~\cite{openai2025pricing}; Qwen2.5-VL tiles the same frame into ${\sim}$1{,}000 tokens~\cite{bai2025qwen25vltechnicalreport}. On-premise costs follow Eqs.~\eqref{eq:hw_cost}--\eqref{eq:energy_cost}. We assume two agency-owned RTX~4090 GPUs (\$1{,}600
MSRP~\cite{nvidia2022rtx4090}, 36-month amortization, 450\,W TDP,
\$0.12/kWh), a workstation-class configuration without data-center infrastructure, for 48\,GB total on-premise VRAM.

\begin{table}[t]
\centering
\caption{\textsc{Model Catalog and Monthly TCO}}
\label{tab:catalog}
\footnotesize
\setlength{\tabcolsep}{3pt}
\begin{tabular}{@{}clllcr@{}}
\toprule
 & Model & Deployment & $q$ & FNR & \$/month \\
\midrule
\multirow{4}{*}{\rotatebox{90}{\scriptsize f1\,(200\,q/h)}}
 & GPT-4o (vision)  & Closed API          & .85 & .04 & 1{,}102 \\
 & Qwen2.5-VL-7B (vision)    & Open API           & .78 & .08 & 29 \\
 & InternVL2-8B (vision)     & On-Premise\,\scriptsize{(16G)} & .78 & .08 & 46 \\
 & YOLOv8-L (vision)         & On-Premise\,\scriptsize{($\!<\!$1G)} & .72 & .11 & $<$1 \\
\midrule
\multirow{4}{*}{\rotatebox{90}{\scriptsize f2\,(120\,q/h)}}
 & GPT-4o-mini       & Closed API          & .82 & --  & 9 \\
 & Gemini-2.5-Flash  & Closed API          & .79 & --  & 3 \\
 & Llama-3.1-8B      & Open API           & .77 & --  & 2 \\
 & Qwen2.5-7B        & On-Premise\,\scriptsize{(15G)} & .77 & --  & 45 \\
\midrule
\multirow{5}{*}{\rotatebox{90}{\scriptsize f3\,(3\,q/h)}}
 & GPT-4o            & Closed API          & .90 & --  & 9 \\
 & Claude Haiku~4.5  & Closed API          & .84 & --  & 4 \\
 & GPT-4o-mini       & Closed API          & .82 & --  & 1 \\
 & Llama-3.1-8B      & Open API           & .78 & --  & $<$1 \\
 & Rule-based        & --                 & .60 & --  & 0 \\
\midrule
\multirow{3}{*}{\rotatebox{90}{\scriptsize f4\,(60\,q/h)}}
 & GPT-4o-mini       & Closed API          & .80 & --  & 6 \\
 & Gemini-2.5-Flash  & Closed API          & .77 & --  & 2 \\
 & Qwen2.5-7B        & Open API           & .75 & --  & 1 \\
\midrule
\multirow{3}{*}{\rotatebox{90}{\scriptsize f5\,(10\,q/h)}}
 & GPT-4o            & Closed API          & .88 & --  & 40 \\
 & GPT-4o-mini       & Closed API          & .80 & --  & 2 \\
 & Llama-3.1-8B      & Open API           & .76 & --  & 1 \\
\bottomrule
\multicolumn{6}{@{}p{\columnwidth}@{}}{\scriptsize
$q$/FNR: Illustrative values informed by published
performance tiers (~\cite{zanella2024harnessing, zheng2023judging}).
API TCO\,$=$\,$\lambda_f \!\cdot\! H \!\cdot\! c_q$ per
Eq.~\eqref{eq:api_cost} with official
pricing~\cite{openai2025pricing,anthropic2025pricing,%
google2025pricing} (Table~\ref{tab:modes});
On-Premise from Eqs.~\eqref{eq:hw_cost}--\eqref{eq:energy_cost}.}
\end{tabular}
\end{table}

\subsection{Results and Analysis}
We implement Algorithm~\ref{alg:greedy} in
Python  on an Apple MacBook Pro (M4, 24\,GB RAM). Experiment 1 presents the FMDP portfolio, while Experiment 2 studies API on-premise break-even points.

\textbf{Experiment~1: Optimal portfolio vs.\ baselines.}
We evaluate five single-tier baselines (Table~\ref{tab:results}),
each forcing every function into one deployment mode:
\emph{All-Closed-Frontier} picks the highest-quality closed model
per function; \emph{All-Closed-Budget} picks the cheapest feasible
closed model; \emph{All-Open-Source-API} uses hosted open-source
models throughout; \emph{All-On-Premise} runs everything locally;
and \emph{Classical-Only} relies on rule-based or classical-ML
methods.

FMDP assigns f1, f2, f4, and f5 to hosted open-source APIs and routes only f3 to GPT-4o-mini, because no open-source option meets its quality floor. The resulting portfolio costs \$34/mo with zero GPU usage. At current open-source API prices (\$0.20/M tokens for vision), f1's 144{,}000 monthly queries cost even less via API than on-premise hardware (\$29 vs.\ \$46/mo).

Three baselines are infeasible. All-On-Premise requires 63\,GB of GPU
memory, exceeding the 48\,GB budget. Classical-Only and
All-Open-Source-API fail the quality threshold for f3 ($q{=}.60$ and
$.78$ vs.\ $q^{\min}{=}.80$). Thus, the all-open-source portfolio nearly matches FMDP in cost but does not satisfy all constraints. FMDP is feasible by assigning f3 to GPT-4o-mini at a marginal cost of \$1/mo, while keeping the other functions on open-source APIs. The two feasible baselines are closed-source and exceed \$1{,}100/mo, driven by f1's sustained vision queries via GPT-4o. FMDP's \$34/mo portfolio is thus, 97\% below the cheapest feasible baseline.

\begin{table}[t]
\centering
\caption{\textsc{Deployment Strategy Comparison}}
\label{tab:results}
\footnotesize
\setlength{\tabcolsep}{3pt}
\begin{tabular}{@{}lrrrc@{}}
\toprule
Strategy & TCO & $\bar{Q}$ & GPU & Feasible? \\
 & \scriptsize(\$/mo) & & \scriptsize(GB) & \\
\midrule
All-Closed-Frontier & 1{,}166 & .85 & 0 & \checkmark \\
All-Closed-Budget   & 1{,}110 & .81 & 0 & \checkmark \\
All-Open-Source-API &     34 & .77 & 0 & $\times$ \\
All-On-Premise      &     226 & .77 & 63 & $\times$ \\
Classical-Only      &   $<$1  & .63 & 0 & $\times$ \\
\textbf{FMDP Optimal} & \textbf{34} & \textbf{.78} & \textbf{0} & \checkmark \\
\bottomrule
\multicolumn{5}{@{}p{\columnwidth}@{}}{\scriptsize
Frontier: highest-$q$ closed model per function.
Budget: cheapest feasible closed model.
Open-Source: Qwen2.5-VL~(vision), Llama~(text);
fails $q^{\min}$ on f3.
On-Premise: needs 63\,GB $>$ 48\,GB budget.
Classical: fails $q^{\min}$ on f3.
$\bar{Q} {=} \tfrac{1}{|\mathcal{F}|}\sum_f q_{fm}$.}
\end{tabular}
\end{table}

\textbf{Experiment~2: API vs.\ on-premise break-even points.}
On-premise deployment becomes cheaper than a cloud API once the
query rate exceeds
\begin{equation}
    \lambda^* \;=\; \frac{C^{\text{hw}}_{md}}{H \cdot c_q}
    \label{eq:tipping}
\end{equation}
where $c_q$ is the per-query API cost and $C^{\text{hw}}_{md}$ is
the monthly hardware amortization. Table~\ref{tab:tipping} evaluates
this for a single RTX~4090 at \$44/mo.

At \$0.20/M tokens, the open-source vision API (Qwen2.5-VL) breaks
even at 309~q/h, above f1's 200~q/h, so FMDP keeps f1 on the API.
Adding cameras or raising the frame rate beyond one frame every
12\,s would cross this threshold and shift f1 on-premise. GPT-4o,
 on the contrary, breaks even at only 8~q/h due to its higher
per-query cost (\$0.0077): for any sustained vision workload beyond
that rate, owning a GPU beats paying closed-API vision prices. Text
workloads are far below their break-even points, 617~q/h for
GPT-4o-mini and 2{,}684~q/h for Llama-3.1-8B. Therefore, no text function
in this scenario justifies local hardware. These thresholds move
with API pricing, which changes frequently: doubling the vision
price to \$0.40/M halves $\lambda^*$ to 154~q/h, below f1's rate,
and the optimal assignment shifts on-premise.

\begin{table}[t]
\centering
\caption{\textsc{API vs.\ On-Premise Break-Even Points}}
\label{tab:tipping}
\footnotesize
\setlength{\tabcolsep}{3pt}
\begin{tabular}{@{}llrl@{}}
\toprule
Cloud API & $c_q$ (\$/q) & $\lambda^*$ (q/h) & Implication \\
\midrule
GPT-4o (vision)      & 0.0077    &     8 & on-prem  \\
Qwen2.5-VL (vision)  & 0.00020   &   308.6 & use API  \\
GPT-4o-mini (text)   & 0.00010   &   617.3 & use API  \\
Llama-3.1-8B (text)  & 0.000023  & 2{,}683.8 & use API  \\
\bottomrule
\multicolumn{4}{@{}p{\columnwidth}@{}}{\scriptsize
$\lambda^*$: query rate above which on-premise is cheaper,
per Eq.~\eqref{eq:tipping} with
$C^{\text{hw}} {=} \$44$/mo (one RTX~4090).
``On-prem'': $\lambda^* < \lambda_{f1}{=}200$.
``Use API'': $\lambda^* > \max_f \lambda_f$.}
\end{tabular}
\end{table}

\section{Discussion}
\label{sec:discussion}
Although the case study is small, it illustrates why FM deployment for TMCs should be treated as a portfolio problem rather than a set of independent model choices. No single deployment tier can be both feasible and cost-minimal. The all-open-source API strategy nearly matches FMDP’s monthly cost but fails the quality requirement for incident report generation, while the cheapest feasible all-closed-source strategy exceeds \$1{,}100/mo because sustained vision API calls dominate cost. FMDP identifies the narrow point where closed-source capability is needed and keeps the remaining functions on lower-cost hosted open-source APIs. Break-even analysis shows that deployment decisions are sensitive to query volume and per-query price. At January 2026 prices, the open-source vision API remains cheaper than local deployment at 200~q/h, but this would reverse if camera sampling rates, the number of streams, or API prices increase. Text functions have much higher break-even rates, thus on-premise GPU deployment is not cost effective for the small volume reporting and advisory workloads in our case study.

\section{Conclusion}
\label{sec:conclusion}
This paper introduced FMDP, a mixed-integer programming framework to jointly select foundation models and deployment modes across TMC functions subject to the constraints of quality, latency, safety, and shared GPU-capacity. We proved FMDP is NP-hard and proposed a two-phase greedy heuristic. In an illustrative five-function case study, FMDP achieves a \$34/mo portfolio, 97\% below the cheapest feasible all-closed-source baseline, by using hosted open-source APIs where possible and a closed-source API only where needed. The break-even analysis further identifies when on-premise GPU deployment becomes cost-effective. 

\textbf{Limitations and future work.}
The case study demonstrates the planning methodology rather than
realistic operational performance: its quality, latency, and safety
parameters are illustrative and should be replaced with
agency-specific measurements prior to operational use. The current formulation assigns one model to each function and does not consider model cascading, price and demand uncertainty, or inter-function dependencies. Future work will extend FMDP to model cascading, multi-period planning, and validation with operational TMC data.

\bibliographystyle{ieeetr}
\bibliography{reference}

\end{document}